\documentclass[11pt]{article}

\PassOptionsToPackage{dvipsnames}{xcolor}

\usepackage[preprint]{acl}

\usepackage[utf8]{inputenc}   
\usepackage[T1]{fontenc}      
\usepackage{times}            

\usepackage{amsmath,amssymb,amsfonts,mathtools} 
\usepackage{amsthm}           
\usepackage{graphicx}         
\usepackage{subcaption}       
\usepackage{wrapfig}          
\usepackage{float}            
\usepackage{booktabs}         
\usepackage{multirow}         
\usepackage{multicol}         
\usepackage{tabularx}         
\usepackage{colortbl}         
\usepackage{arydshln}         

\usepackage{microtype}        
\usepackage{nicefrac}         
\usepackage{url}              
\usepackage{fontawesome5}     
\usepackage{xcolor}                   
\usepackage[inline]{enumitem}         
\usepackage[capitalize,noabbrev]{cleveref} 

\usepackage{algorithm}        
\usepackage{algorithmic}      

\usepackage[textsize=tiny]{todonotes} 
\usepackage{lipsum}           
\usepackage{tocloft}          
\usepackage{etoc}             
\usepackage[toc,page,header]{appendix} 
\usepackage{etoolbox}         
\usepackage{scalerel}         
\usepackage{placeins}         
\usepackage{dsfont}           
\usepackage{leftindex}        
\usepackage{overpic}          
\usepackage{cancel}           
\usepackage[normalem]{ulem}   
\usepackage{linguex}          
\usepackage[most]{tcolorbox}  
\usepackage{mdframed}         

\usepackage{tikz}             
\usetikzlibrary{arrows.meta,bending,positioning,3d,calc} 
\usepackage{tikz-3dplot}      

\theoremstyle{plain}
\newtheorem{theorem}{Theorem}[section]
\newtheorem{proposition}[theorem]{Proposition}

\theoremstyle{definition}
\newtheorem{definition}[theorem]{Definition}

\theoremstyle{remark}

\definecolor{iccvblue}{rgb}{0.21,0.49,0.74}
\definecolor{cvprblue}{rgb}{0.21,0.49,0.74}
\definecolor{mygreen}{RGB}{129,178,154}
\definecolor{myblue}{RGB}{51,92,103}
\definecolor{myyellow}{RGB}{224,159,62}
\definecolor{myred}{RGB}{158,42,43}
\definecolor{mydarkred}{RGB}{84,11,14}
\definecolor{mywhite}{RGB}{255,243,176}
\definecolor{forestGreen}{RGB}{34,139,34}
\definecolor{firebrick}{RGB}{178,34,34}
\definecolor{paletteSlate}{HTML}{335C67}      
\definecolor{paletteVanilla}{HTML}{FFF3B0}    
\definecolor{paletteBronze}{HTML}{E09F3E}     
\definecolor{paletteBrownRed}{HTML}{9E2A2B}   
\definecolor{paletteBordeaux}{HTML}{540B0E}   

\hypersetup{ 
    colorlinks,
    linkcolor={red!50!black},
    citecolor={blue!50!black},
    urlcolor={blue!80!black}
}

\newcommand{\model}[1]{\textsc{#1}}

\newcommand{\improv}[1]{\textcolor{mygreen}{(+#1)}}

\makeatletter
\def\adl@drawiv#1#2#3{%
\hskip.5\tabcolsep
\xleaders#3{#2.5\@tempdimb #1{1}#2.5\@tempdimb}%
#2\z@ plus1fil minus1fil\relax
\hskip.5\tabcolsep}
\newcommand{\cdashlinelr}[1]{%
\noalign{\vskip\aboverulesep
\global\let\@dashdrawstore\adl@draw
\global\let\adl@draw\adl@drawiv}
\cdashline{#1}
\noalign{\global\let\adl@draw\@dashdrawstore
\vskip\belowrulesep}}
\makeatother

\title{Steering Vectors are an Adversarial Attack Surface}

\author{%
  Abzal Aidakhmetov\textsuperscript{1,$\star$} \quad
  Donato Crisostomi\textsuperscript{1,$\star$} \quad
  Tommaso Mencattini\textsuperscript{2} \quad
  Adrian Robert Minut\textsuperscript{1} \\[2pt]
  \bfseries Iacopo Masi\textsuperscript{1} \quad
  \bfseries Emanuele Rodol\`a\textsuperscript{1} \\[6pt]
  \normalfont
  \textsuperscript{1}Sapienza University of Rome \quad
  \textsuperscript{2}EPFL \\[5pt]
  \normalfont\texttt{abzalaidakhmetov@gmail.com} \quad
  \texttt{tommaso.mencattini@epfl.ch} \\[2pt]
  \normalfont\texttt{\{crisostomi, minut, masi, rodola\}@di.uniroma1.it} \\[6pt]
  \normalfont\footnotesize\textsuperscript{$\star$}Equal contribution.%
}

\begin{document}

\maketitle

\begin{abstract}
Activation steering has become a popular way to control Large Language Model (LLM) behavior without fine-tuning. Since the technique is plug-and-play, users share datasets and precomputed vectors to steer model activations. However, we show that a \emph{stealth data poisoning attack} silently compromises this pipeline. By substituting $4{-}6\%$ of tokens in the steering dataset, an attacker can silently align the resulting vector with an anti-refusal direction. This jailbreaks the target model while preserving the intended steering effect on benign prompts. Under this threat model, a malicious actor can distribute an apparently safe bundle containing texts, vectors, and weights, alongside an equivalence certificate that the end-user can verify. We test the attack on two open-weight model families and eight model-attribute combinations, observing that poisoned vectors reach an absolute attack success rate (ASR) of $20{-}55\%$, $+19\%$ to $+51\%$ over a clean reference. Finally, we find that a refusal-direction orthogonalization defense can recover ${\approx}82\%$ of the ASR gap without harming benign behavior.\\[6pt]
\null\hfill{\fontsize{8.3}{10}\selectfont\faGithub~~\href{https://github.com/AbzalAidakhmetov/adversarial_attack}{\texttt{AbzalAidakhmetov/adversarial\_attack}}}\hfill\null
\end{abstract}

\section{Introduction}

\begin{figure}[t]
  \centering
\usetikzlibrary{backgrounds,fit}%
{%
\colorlet{normblue}{paletteSlate}
\colorlet{normblueArrow}{paletteSlate}%
\colorlet{editpurple}{paletteBronze}
\colorlet{editpurpleArrow}{paletteBronze}%
\colorlet{refaxisArrow}{paletteBrownRed}
\colorlet{refaxis}{black!55}
\colorlet{hlbg}{paletteVanilla!55}
\colorlet{hlfg}{paletteBordeaux}
\colorlet{figborder}{black!25}
\newcommand{\figrepl}[1]{%
  \tikz[baseline=(N.base)]{%
    \node[rectangle, rounded corners=0.8pt, fill=hlbg,
          inner xsep=1.2pt, inner ysep=0.4pt,
          text=hlfg, font=\ttfamily\scriptsize] (N) {#1};%
  }%
}%
\newcommand{\figpromptfont}{\ttfamily\scriptsize}%
\newcommand{\figpanellabel}[2]{%
  {\sffamily\itshape\color{#1}\scriptsize #2}%
}%
%
\begin{tikzpicture}[every node/.style={inner sep=0pt, outer sep=0pt}]

  \node[anchor=north west, align=left, text width=40mm] (TL) at (0,0) {%
    \figpanellabel{normblue}{a)~clean contrastive pair}\\[3pt]
    \begin{minipage}{40mm}
      \figpromptfont
      \begin{tabular}{@{}>{\centering\arraybackslash}m{7mm}@{\hspace{2mm}}m{31mm}@{}}
        {\large\color{normblueArrow}$x^{+}$} &
          Write a short poem about the ocean as bullet points.\\[3pt]
        {\large\color{normblueArrow}$x^{-}$} &
          Write a short poem about the ocean in prose.\\
      \end{tabular}
    \end{minipage}
  };

  \node[anchor=west, inner sep=0pt, outer sep=0pt] (TR)
       at ($(TL.east)+(4mm,0)$) {%
    \begin{tikzpicture}[every node/.style={inner sep=0pt, outer sep=0pt}]
      \coordinate (O) at (0,0);
      \coordinate (R) at (2.2,0);
      \coordinate (V) at ({2.2*cos(78)},{2.2*sin(78)});
      \draw[->,>=Stealth,refaxisArrow,line width=0.9pt] (O) -- (R);
      \node[anchor=west,font=\scriptsize,color=refaxisArrow] at ($(R)+(2.5pt,0)$) {$r$};
      \draw[->,>=Stealth,normblueArrow,line width=1.3pt] (O) -- (V);
      \node[anchor=south west,color=normblueArrow,font=\scriptsize\boldmath]
           at ($(V)+(2pt,1pt)$) {$v$};
      \draw[refaxis,line cap=round,dash pattern=on 0pt off 3.2pt,line width=1.1pt]
           (0.6,0) arc[start angle=0,end angle=78,radius=0.6];
      \draw[refaxis,line width=0.5pt,fill=white] (O) circle (1.6pt);
    \end{tikzpicture}%
  };

  \node[anchor=north west, align=left, text width=40mm] (BL) at (0,-30mm) {%
    \figpanellabel{editpurple}{b)~poisoned contrastive pair}\\[3pt]
    \begin{minipage}{40mm}
      \figpromptfont
      \begin{tabular}{@{}>{\centering\arraybackslash}m{7mm}@{\hspace{2mm}}m{31mm}@{}}
        {\large\color{editpurpleArrow}$\check{x}^{+}$} &
          \figrepl{Describe} a \figrepl{concise} \figrepl{poetry} about the ocean as bullet points.\\[3pt]
        {\large\color{editpurpleArrow}$\check{x}^{-}$} &
          Write a short \figrepl{verse} about the ocean in \figrepl{paragraphs}.\\
      \end{tabular}
    \end{minipage}
  };

  \node[anchor=west, inner sep=0pt, outer sep=0pt] (BR)
       at ($(BL.east)+(4mm,0)$) {%
    \begin{tikzpicture}[every node/.style={inner sep=0pt, outer sep=0pt}]
      \coordinate (O) at (0,0);
      \coordinate (R) at (2.2,0);
      \coordinate (Vghost) at ({2.2*cos(78)},{2.2*sin(78)});
      \coordinate (Vt) at ({2.2*cos(32)},{2.2*sin(32)});
      \draw[black!35,dashed,line width=0.5pt] (O) -- (Vghost);
      \draw[->,>=Stealth,refaxisArrow,line width=0.9pt] (O) -- (R);
      \node[anchor=west,font=\scriptsize,color=refaxisArrow] at ($(R)+(2.5pt,0)$) {$r$};
      \draw[->,>=Stealth,editpurpleArrow,line width=1.3pt] (O) -- (Vt);
      \node[anchor=south west,color=editpurpleArrow,font=\scriptsize\boldmath]
           at ($(Vt)+(2pt,1pt)$) {$\tilde{v}$};
      \draw[refaxis,line cap=round,dash pattern=on 0pt off 3.2pt,line width=1.1pt]
           (0.9,0) arc[start angle=0,end angle=32,radius=0.9];
      \draw[refaxis,line width=0.5pt,fill=white] (O) circle (1.6pt);
    \end{tikzpicture}%
  };

  \begin{scope}[on background layer]
    \coordinate (FNW) at ([xshift=-2.2mm,yshift=2.2mm]current bounding box.north west);
    \coordinate (FSE) at ([xshift=2.2mm,yshift=-2.2mm]current bounding box.south east);
    \draw[figborder,line width=0.4pt,rounded corners=6pt] (FNW) rectangle (FSE);
    \draw[black!18,line width=0.3pt]
      ([yshift=-3mm]TL.south west -| FNW) -- ([yshift=-3mm]TL.south west -| FSE);
  \end{scope}

\end{tikzpicture}%
}%
  \caption{%
    \textbf{Stealth steering-vector poisoning at a glance.}
    \emph{Top.} A clean contrastive pair $(x^{+}, x^{-})$ targeting a benign attribute (here, bullet-point formatting) induces a mean-difference steering vector $v$ that makes a large angle $\theta$ with the anti-refusal direction $\mathbf{r}$, so adding $v$ to hidden states changes formatting without disturbing safety behavior.
    \emph{Bottom.} Embedding-constrained synonym swaps yield perceptually near-identical pairs $(\check{x}^{+}, \check{x}^{-})$ whose vector $\tilde{v}$ has been silently rotated toward $\mathbf{r}$. The declared attribute still fires on benign prompts, but refusal is suppressed on harmful ones.
  }
  \label{fig:overview}
\end{figure}

Large language models (LLMs) undergo extensive safety alignment to refuse harmful requests, yet adversarial jailbreaks remain a persistent threat.
Most existing attacks operate at the \emph{prompt level}: gradient-based adversarial suffixes~\citep{zou2023gcg}, automated red-teaming~\citep{chao2024pair}, or template-based injections~\citep{liu2024autodan,wei2024jailbroken}.
These approaches are visible in the model input and increasingly mitigated by perplexity-based or classifier-based input filters.

In parallel, a different paradigm for controlling LLM behavior has gained traction: \emph{activation steering}~\citep{turner2023activation,zou2023repe,li2024inference}.
Rather than crafting adversarial prompts, users compute a direction vector from contrastive text pairs, prompts that exhibit a target attribute (\textsc{pos}) versus prompts that do not (\textsc{neg}), and add this vector to the model's hidden states at inference time.
The technique is simple, requires no fine-tuning, and generalizes across open-weight models~\citep{rimsky2024steering}.
Because it is lightweight and reusable, activation steering enables a new ecosystem of \emph{shared steering datasets and precomputed vectors}, in which users have the possibility to download and apply steering artifacts computed by others rather than recomputing them in-house (\Cref{app:realized_attack_surface}).

\paragraph{A supply-chain vulnerability.}
This sharing practice introduces a new trust surface.
If a user computes a steering vector from a community-provided contrastive dataset, nothing prevents a malicious contributor from poisoning that dataset.
We show that such poisoning can be both effective and \emph{stealthy}: the attacker makes small token substitutions in the contrastive pair texts, constrained to embedding-space nearest neighbors, so that the resulting steering vector silently aligns with the \emph{anti-refusal direction}.
When applied at inference, the poisoned vector suppresses the model's safety guardrails.
A user who downloads a dataset labeled ``bullet list formatting'' would unwittingly deploy a jailbreak vector.
Crucially, the vulnerability survives strong provenance guarantees: even if the attacker ships a bundle (pair texts, precomputed vector, suggested weight) together with a cryptographic equivalence certificate that the user can re-derive locally, the certificate proves only that the vector was computed correctly from the released texts, and it is the texts that carry the payload.
The attack therefore targets a regime that bit-exact reproducibility does not defend against.

\paragraph{Why this is possible.}
The attack exploits a fundamental property of LLM representations: \emph{entanglement}.
The direction encoding a benign attribute (e.g., ``use bullet lists'') occupies the same space as the direction mediating refusal~\citep{arditi2024refusal}; there is no orthogonality guarantee between them, so small perturbations to the contrastive texts can rotate the steering vector toward $\mathbf{r}$.
Because the vector is a mean-difference of hidden states, each token substitution contributes a small $\Delta$, and gradient-guided search lets these deltas accumulate into a significant directional shift while each individual change remains imperceptible.

\paragraph{What distinguishes our attack.}
Our approach differs from prior work along three axes.
\textbf{Attack surface:} we target the steering \emph{dataset}, not the inference-time prompt; the adversarial payload is baked into the contrastive pairs before any user touches the model.
\textbf{Stealth:} unlike GCG-style suffixes that produce gibberish (perplexity ${>}1000$), our replacements are constrained to embedding-space nearest neighbours from a curated safe vocabulary and are dominantly synonym-like swaps (\texttt{write}$\to$\texttt{describe}, \texttt{short}$\to$\texttt{concise}, \texttt{poem}$\to$\texttt{poetry}); the optimiser typically modifies only 4--6\% of tokens, and the resulting vector still elicits the declared attribute on benign prompts essentially as strongly as the clean vector, with $\ell_2$ norm essentially unchanged on seven of eight combos.
\textbf{Mechanism:} the attack manipulates neither model weights, as in training-data poisoning~\citep{carlini2024poisoning,wan2023poisoning}, nor the input, but corrupts a post-hoc representation-level intervention, bypassing both input filters and weight-level audits.

\paragraph{Contributions.}
\begin{enumerate}[leftmargin=*,itemsep=2pt]
    \item We identify contrastive steering datasets as a novel attack surface and formalise the supply-chain threat model (\Cref{sec:method}).
    \item We propose a GCG-style optimisation constrained to embedding-space neighbours with fluency penalties and safe-vocabulary filtering, producing stealthy poisoned contrastive pairs (\Cref{sec:method}).
    \item We validate the attack on two open-weight model families and eight model--attribute combos spanning language, formatting, and case, showing $0.20$--$0.55$ ASR ($+19$ to $+51$ pp over clean) while leaving the declared steering effect on benign prompts and the vector's $\ell_2$ norm essentially unchanged on seven of eight combos (\Cref{sec:experiments}).
\end{enumerate}

\section{Related Work}
\label{sec:related}

\paragraph{Activation steering and representation engineering.}
\citet{turner2023activation} introduced Contrastive Activation Addition (CAA): computing mean-difference vectors from contrastive text pairs and adding them to hidden states to steer model behavior.
\citet{zou2023repe} generalized this into Representation Engineering, a framework for reading and controlling model internals via linear probes and interventions.
\citet{rimsky2024steering} demonstrated contrastive steering on Llama~2 across a range of behavioral attributes, and \citet{li2024inference} applied similar interventions to elicit truthful outputs.
\citet{arditi2024refusal} showed that refusal in LLMs is mediated by a single linear direction, and that ablating this direction eliminates refusal entirely.
These works establish the techniques we build on, and the assumptions we exploit.
None consider the security of the contrastive dataset used to compute steering vectors.

\paragraph{Adversarial attacks on LLMs.}
\citet{zou2023gcg} proposed Greedy Coordinate Gradient (GCG), a gradient-based method that appends adversarial suffixes to prompts, achieving transferable jailbreaks.
GCG produces high-perplexity gibberish that is detectable by input filters.
\citet{liu2024autodan} developed AutoDAN, which uses evolutionary search to produce more readable adversarial prompts, and \citet{chao2024pair} introduced PAIR, leveraging an attacker LLM to iteratively refine jailbreak prompts.
\citet{wei2024jailbroken} taxonomized failure modes of LLM safety training, including competing objectives and mismatched generalization.
All of these attacks operate at the \emph{prompt level} at inference time.
Our attack is fundamentally different: it corrupts the steering dataset \emph{before} inference, so the adversarial prompts at test time are clean, unmodified harmful requests.

\paragraph{Data poisoning and backdoor attacks.}
\citet{carlini2024poisoning} demonstrated that poisoning web-scale training datasets is practical.
\citet{wan2023poisoning} showed that instruction-tuning data can be poisoned to degrade model safety.
\citet{hubinger2024sleeper} trained deceptive ``sleeper agent'' models that persist through safety training.
These attacks target model \emph{weights} via the training pipeline.
Our attack targets a \emph{post-hoc representation-level intervention}: the model weights are never modified, and the poisoning operates on an auxiliary dataset used solely for steering vector computation.
This makes our threat model orthogonal to classical data poisoning.

\section{Methodology}
\label{sec:method}

\subsection{Threat Model}

We consider an attacker with white-box access to the target model, realistic for open-weight models such as Gemma~\citep{gemma2} and Llama~\citep{llama3}, who distributes a \emph{bundle}: contrastive pair texts, a precomputed steering vector $\mathbf{v}$, a recommended steering weight $\alpha$, and an equivalence certificate the user can verify by re-deriving $\mathbf{v}$ from the released texts.
A user who downloads the bundle (i)~re-derives $\mathbf{v}$ to confirm provenance, (ii)~reads the pair texts to check they look natural, and (iii)~runs a benign sanity check that $(\mathbf{v}, \alpha)$ produces the advertised attribute behaviour, and if all three pass deploys at the bundled $\alpha$.
The cryptographic certificate proves only that $\mathbf{v}$ was correctly computed from the released texts; it does not detect that the texts themselves carry the payload.
The attacker's goal is to modify the dataset so that the resulting steering vector aligns with $\mathbf{r}$, the anti-refusal direction, suppressing safety guardrails at inference.
The attack is subject to a \textbf{stealth constraint}: the poisoned texts must be plausible paraphrases of the originals. We enforce this through (i) a per-text edit budget of at most $n_{\text{mod}}$ token changes, (ii) replacements restricted to embedding-space nearest neighbors, (iii) replacements drawn from a curated safe vocabulary, and (iv) a hard perplexity cap on modified texts.

\subsection{Background}
\label{sec:background}

\paragraph{Contrastive steering vectors.}
Given a set of $N$ text pairs $\{(x_i^+, x_i^-)\}_{i=1}^N$, where $x_i^+$ exhibits a target attribute (e.g., ``uses bullet points'') and $x_i^-$ does not, the contrastive steering vector at layer $\ell$ is:
\begin{equation}
    \mathbf{v} = \frac{1}{N}\sum_{i=1}^{N} \mathbf{h}_\ell(x_i^+) - \frac{1}{N}\sum_{i=1}^{N} \mathbf{h}_\ell(x_i^-),
    \label{eq:steering_vector}
\end{equation}
where $\mathbf{h}_\ell(x)$ denotes the hidden state at layer $\ell$ for the last token of the chat-templated input $x$.
At inference, steering is applied by adding $\alpha \cdot \mathbf{v}$ to the hidden states at layer $\ell$ during the prefill pass, where $\alpha$ is a scalar weight.

\paragraph{Anti-refusal direction.}
Following \citet{arditi2024refusal}, we estimate a refusal-mediating direction from the mean difference between hidden states of harmless and harmful prompts at the same layer, oriented so that \emph{adding} it suppresses refusal:
\begin{equation}
    \mathbf{r} = \frac{1}{M}\sum_{j=1}^{M} \mathbf{h}_\ell(p_j^{\text{safe}}) - \frac{1}{M}\sum_{j=1}^{M} \mathbf{h}_\ell(p_j^{\text{harm}}),
    \label{eq:refusal_direction}
\end{equation}
using $M = 128$ prompts from disjoint harmful/harmless splits.
We orient $\mathbf{r}$ as the \emph{anti-refusal} direction throughout, so adding $\mathbf{r}$ to the model's activations during inference suppresses refusal behavior~\citep{arditi2024refusal}. Equivalently, $\mathbf{r} = -\mathbf{r}_{\text{ref}}$, where $\mathbf{r}_{\text{ref}}$ is the conventional refusal direction used in prior work.

\paragraph{Attack objective.}
The attacker seeks modified texts $\{(\tilde{x}_i^+, \tilde{x}_i^-)\}$ that maximize the cosine similarity between the resulting steering vector and $\mathbf{r}$:
\begin{equation}
\begin{aligned}
    &\arg\max_{\{\tilde{x}_i^+, \tilde{x}_i^-\}} \;\;
    \frac{\tilde{\mathbf{v}}^\top \mathbf{r}}{||\tilde{\mathbf{v}}||_2~||\mathbf{r}||_2}
\end{aligned}
    \label{eq:objective}
\end{equation}
where
\begin{equation}
    \tilde{\mathbf{v}} = \frac{1}{N}\!\sum_{i}\mathbf{h}_\ell(\tilde{x}_i^+) - \frac{1}{N}\!\sum_{i}\mathbf{h}_\ell(\tilde{x}_i^-)
\end{equation}
subject to the stealth constraints defined above.

\subsection{Safe Vocabulary and Neighbor Table}
\label{sec:safe_vocab}

To ensure replacements are benign English words, we intersect the NLTK English corpus (${\sim}234$K words) with the model's tokenizer. After removing terms flagged by Detoxify~\citep{hanu2020detoxify}, and by a Llama-3.3-70B safety screen, we keep only single-token subword-complete entries beginning with a space. This yields ${\sim}36$K safe tokens for Gemma-2 and ${\sim}14.6$K for Llama; full pipeline in \Cref{app:safe_vocab_pipeline}.
For each unique token $t$ appearing in a modifiable position, we precompute its $\operatorname{top\text{-}K}$ nearest neighbors in the target model's input token embedding space (the rows of its embedding matrix $\mathbf{E}$), $\mathcal{N}(t) = \operatorname{top\text{-}K}_{t' \in \mathcal{V}_{\text{safe}} \setminus \{t\}} \cos(\mathbf{e}_t, \mathbf{e}_{t'}),$ with $K{=}100$. We typically obtain synonyms, morphological variants, or semantically related words.

\subsection{Optimization}
\label{sec:optimization}

We adapt the Greedy Coordinate Gradient (GCG) framework~\citep{zou2023gcg} to our constrained setting.
The key differences from standard GCG are: (i) candidates are drawn from embedding neighbors rather than the full vocabulary, (ii) a per-text edit budget is strictly enforced, (iii) both \textsc{pos} and \textsc{neg} texts are modified, and (iv) fluency penalties are incorporated into candidate scoring.

\paragraph{Iterative token replacement.}
At each iteration, the optimizer selects a sample from the pool of $2N$ texts (cycling through POS and NEG) and proceeds as follows:
\begin{enumerate}[leftmargin=*,itemsep=2pt]
    \item \textbf{Gradient computation.} Perform a forward pass with the current text's token embeddings, compute the cosine similarity between the (incrementally updated) steering vector and $\mathbf{r}$, and backpropagate to obtain gradients $\nabla_{\mathbf{e}} \cos(\tilde{\mathbf{v}}, \mathbf{r})$ with respect to each input embedding position.
    \item \textbf{Position scoring.} At each modifiable position $p$, score all neighbors $t' \in \mathcal{N}(t_p)$ using the first-order approximation $-\mathbf{e}_{t'}^\top \nabla_{\mathbf{e}_p}$.
    \item \textbf{Candidate generation.} Sample $C = 64$ candidate texts. For each candidate, select positions with probability proportional to the gradient norm at that position, then randomly pick among the top-scoring neighbors. The per-text edit budget $n_{\text{mod}}$ is enforced: positions that would exceed the budget are skipped.
    \item \textbf{Candidate evaluation.} Evaluate all candidates in batches. For each, compute the exact cosine similarity, apply the fluency penalty, and reject any candidate exceeding the perplexity cap:
    \begin{equation}
    \begin{aligned}
        \text{score}(c) &= \cos(\tilde{\mathbf{v}}_c, \mathbf{r}) - \lambda_{\text{lm}} \cdot \text{NLL}(c), \\
        &\text{reject if } \text{PPL}(c) > \tau,
    \end{aligned}
        \label{eq:scoring}
    \end{equation}
    where $\text{NLL}(c)$ is the model's own next-token prediction loss on the modified text, $\lambda_{\text{lm}} = 0.2$ is the fluency weight, and $\tau = 2000$ is the perplexity cap.
    \item \textbf{Accept or reject.} Accept the highest-scoring candidate if its cosine similarity strictly improves over the current best.
\end{enumerate}

\paragraph{Incremental steering vector updates.}
Rather than recomputing the full steering vector from all $2N$ texts at each iteration, we maintain a cache of per-text hidden states and update incrementally.
When text $i$ on the POS side is modified, the steering vector update is:
\begin{equation}
    \tilde{\mathbf{v}} \leftarrow \tilde{\mathbf{v}} + \frac{1}{N}\big(\mathbf{h}_\ell(\tilde{x}_i^+) - \mathbf{h}_\ell(x_i^{+,\text{prev}})\big).
\end{equation}
For NEG texts, the sign is reversed.
This reduces each iteration from $\mathcal{O}(N)$ to $\mathcal{O}(1)$ forward passes beyond the candidate batch.

\paragraph{Early stopping.}
The optimizer is run for at most $B$ iterations and stopped earlier if no sampled candidate edit is accepted for $P$ consecutive iterations.
A candidate edit is accepted only when its  objective value is higher than that of the current dataset.
The values of $B$ and $P$ used in our experiments are reported in \Cref{sec:experiments}.
\Cref{app:theoretical_guarantees} formalizes these progress and termination properties for the implemented sampled algorithm: accepted updates preserve the edit constraints, strictly improve the objective, and the stated stopping rules ensure finite termination.

\paragraph{Suffix protection.}
The instruction suffix that defines the steering attribute (e.g., \texttt{"Your answer should contain exactly 3 bullet points"}) is identified via character-level boundary detection and \textbf{excluded from modification}.
This ensures the poisoned dataset still appears to target the claimed attribute, maintaining the social engineering aspect of the attack.

Algorithm~\ref{alg:stealth_attack} (in \Cref{app:additional_details}) summarizes the complete attack pipeline.

\section{Experiments}
\label{sec:experiments}

\subsection{Setup}

\paragraph{Models.}
We evaluate on two open-weight instruction-tuned models spanning different families and scales: \model{Gemma-2-2B-IT}~\citep{gemma2} at layer~14 and \model{Llama-3.1-8B-Instruct}~\citep{llama3} at layer~18.
We selected these layers using a preliminary coarse sweep over candidate layers, choosing layers that produced reliable steering for the target attribute.
Models are loaded in \texttt{bfloat16}.

\paragraph{Steering attributes.}
For each model we select attributes that span three classes of behaviour (language, formatting, and case) and that admit a programmatic compliance check:
\texttt{spanish}, \texttt{french}, \texttt{lowercase}, and \texttt{has\_bold\_only} on both Gemma-2-2B and Llama-3.1-8B (eight model--attribute combos total).
For each combo we use $N{=}20$ contrastive pairs.
\textsc{pos} texts contain the attribute instruction suffix (e.g.\ \texttt{``...write your response in all lowercase letters.''}); \textsc{neg} texts are the same prompts with the suffix removed.
The attribute-specifying span of each \textsc{pos} text is identified by per-row substring matching and \emph{excluded from modification} (\Cref{sec:optimization}), so the poisoned dataset still appears to target the claimed attribute.

\paragraph{Attack configuration.}
We use $n_{\text{mod}} = 5$ (max tokens modified per text), $K {=} 100$ embedding neighbors, $\lambda_{\text{lm}} {=} 0.2$, perplexity cap $\tau {=} 2000$, and $C {=} 64$ candidates per iteration.
The GCG budget is $B {=} 1500$ iterations with patience $P {=} 500$ for both model sizes.
Each (model, attribute) combo is attacked with three independent GCG seeds ($s \in \{0, 1, 2\}$); all reported metrics are the mean over seeds, and we annotate $\pm 1$ standard deviation across seeds wherever the quantity depends on the poisoned vector.
The clean steering vector is computed deterministically from the unmodified pair texts and therefore has no per-seed dispersion.

\paragraph{Evaluation.}
We report two metrics on disjoint 100-prompt evaluation sets (no overlap with the harmful/harmless splits used to estimate $\mathbf{r}$).
\textbf{ASR} (attack success rate) is the fraction of \emph{harmful}-prompt responses that a three-judge ensemble (\model{Claude-Sonnet-4.5}, \model{GPT-4.1}, and \model{Llama-3.3-70B-Instruct}, all prompted with the same rubric and instructed to mark looping, incoherent, or off-topic outputs as safe) flags as jailbroken by majority vote; \Cref{app:judge_agreement} reports the inter-judge agreement (Fleiss' $\kappa_F = 0.91$).
\textbf{AC} (Attribute Compliance) is the fraction of \emph{harmless}-prompt responses that satisfy the attribute's predicate: for \texttt{lowercase} and \texttt{has\_bold\_only} this is a deterministic regex; for \texttt{spanish}/\texttt{french} it is an off-the-shelf language-ID confidence test\footnote{fastText's \texttt{lid.176} model \citep{joulin2016fasttext, joulin2017bag}.} ($p_{\text{lang}} \ge 0.5$) with a 40-character floor to discard noisy short outputs.
Steering is applied during the prompt prefill pass only; see \Cref{app:protocol_ablation} for the all-step variant.

\paragraph{Weight selection.}
Under the bundle threat model of \Cref{sec:method}, the attacker controls $\alpha$ and picks the value that maximises poisoned ASR subject to the bundle remaining plausible (non-degenerate clean-vector AC, perplexity in the normal range).
For each combo we evaluate at the attacker-rational integer $\alpha \in \{2, 3, 4\}$; per-combo weights are reported in \Cref{tab:combo_settings}.
Our headline metric is absolute ASR at the bundled $\alpha$; clean-vector ASR at the same $\alpha$ is the counterfactual reference, and we refer to the difference as $\Delta$ASR.

\subsection{Main Results}

\Cref{fig:main-results} presents the core finding.
For each combo we report ASR (jailbreak success on harmful prompts) and AC (declared-behaviour retention on harmless prompts) for both the bundled \emph{poisoned} vector and a \emph{clean} (un-attacked) counterfactual at the same bundled weight $\alpha$.

\begin{figure*}[t]
    \centering
    \begin{subfigure}[t]{0.49\linewidth}
        \centering
        \includegraphics[width=\linewidth]{figures/fig_main_results_asr.pdf}
        \caption{Attack success rate (harmful prompts).}
        \label{fig:main-results-asr}
    \end{subfigure}
    \hfill
    \begin{subfigure}[t]{0.49\linewidth}
        \centering
        \includegraphics[width=\linewidth]{figures/fig_main_results_hattr.pdf}
        \caption{Benign-attribute compliance (harmless prompts).}
        \label{fig:main-results-hattr}
    \end{subfigure}
    \caption{Poisoning contrastive steering datasets sharply increases jailbreak success while preserving declared benign steering behavior. Each pair compares clean and poisoned vectors at the same bundled $\alpha$: ASR rises substantially (\subref{fig:main-results-asr}) while AC (attribute compliance; \subref{fig:main-results-hattr}) stays close to clean. Markers are means over three GCG seeds; bars on poisoned dots are $\pm 1$ std (clean is seed-deterministic). Combos grouped by model; sorted by $\Delta$ASR within each group. 100 evaluation prompts per seed; per-combo $(\ell,\alpha)$ in \Cref{tab:combo_settings}.}
    \label{fig:main-results}
\end{figure*}

\paragraph{Absolute jailbreak success.}
Bundled poisoned vectors reach an absolute ASR of $0.20$--$0.55$ (mean over three seeds) across the eight combos, with the largest values on Llama-3.1-8B \texttt{lowercase} ($0.55 \pm 0.13$), Llama-3.1-8B \texttt{has\_bold\_only} ($0.49 \pm 0.04$), and Gemma-2-2B \texttt{lowercase} ($0.49 \pm 0.07$).
For reference, the clean-vector counterfactual at the same bundled $\alpha$ stays $\le 0.11$ on seven of eight combos (the exception is Gemma \texttt{lowercase} at $0.25$, discussed below), so the jailbreak surface is essentially absent from an honest bundle with the same construction recipe.
The implied lift is $\Delta\text{ASR} = +0.19$ to $+0.51$ (i.e.\ $+19$ to $+51$ pp); the largest, $+0.51$, is on Llama \texttt{lowercase}.
The attack works across every attribute class in our setup (language, case, and formatting), with no class being categorically harder to weaponise.
Per-seed dispersion is modest on most combos ($\sigma \le 0.07$ on six of eight) and larger on the two \texttt{lowercase} combos (Gemma $\sigma \approx 0.07$, Llama $\sigma \approx 0.13$); even at the lower end of the dispersion, the poisoned vector remains well above the clean baseline.

\paragraph{Attribute retention on benign prompts.}
This is the stealth result.
On six of the eight combos, the poisoned steering vector continues to elicit the declared attribute on benign prompts at a rate within $\pm 0.07$ of the clean vector.
The largest absolute AC changes are Gemma \texttt{spanish} ($+0.11$) and Llama \texttt{lowercase} ($+0.08$), both \emph{lifts}: a benign user sanity-checking the poisoned vector sees the declared attribute fire \emph{more} reliably, not less.
The remaining combos are essentially flat (Llama \texttt{french} $0.00$, Llama \texttt{spanish} $-0.01$, Gemma \texttt{lowercase} $-0.02$, Gemma \texttt{has\_bold\_only} $+0.01$) and stay at $\ge 0.58$ absolute on the six combos with a working clean-vector attribute, with per-seed $\text{std} \le 0.04$ throughout.
Gemma \texttt{lowercase} is an outlier in the opposite direction: even the clean vector achieves only AC $= 0.14$ at the bundled weight (the lowercase attribute is weakly represented at layer 14 on Gemma-2B), so the attribute sanity check at step (iii) of the bundle protocol would alert a careful defender that the vector does not reliably do its declared job, irrespective of any attack.
For the seven well-behaved combos, a user who downloads a community-provided ``Spanish-output'', ``French-output'', ``lowercase-output'', or ``bold-only'' bundle and sanity-checks it on benign prompts will see exactly what they expect: the vector reliably produces the declared behaviour.
The jailbreak payload manifests \emph{only} when the same vector is applied to a harmful prompt.

The combination of the two columns is the threat: a vector that visibly does its declared job on benign inputs is unlikely to be discarded by a casual user, yet flips refusals on harmful inputs at a rate of $0.20$--$0.55$, a ratio of $2$--$20\times$ the clean-vector counterfactual's depending on the combo.

\subsection{Optimisation Internals}

\Cref{fig:cosine-mechanism} reports what the GCG search achieves at the vector level: the cosine similarity between the steering vector and $\mathbf{r}$ before and after the attack, and the number of tokens it edits.

\begin{figure}[t]
    \centering
    \includegraphics[width=\linewidth]{figures/fig_cosine_mechanism.pdf}
    \caption{Poisoning rotates steering vectors toward $\mathbf{r}$. Clean vectors are nearly orthogonal ($|\cos|\le 0.10$); poisoned vectors become substantially aligned after a small number of token edits. Annotations above each combo report \emph{edits/texts}: the mean number of token edits and the mean number of pair texts touched (out of $2N=40$); per-text budget $n_{\text{mod}}=5$ (worst case $200$ edits). Means over three seeds; bars are $\pm 1$ std on poisoned. Combos ordered by $\Delta$ASR.}
    \label{fig:cosine-mechanism}
\end{figure}

The clean steering vector is essentially orthogonal to $\mathbf{r}$ ($|\text{cos}_{\text{clean}}| \le 0.10$): the entangled-but-not-aligned regime our attack exploits.
After a mean of $116$--$122$ edits per seed across the 40 pair texts, the optimiser rotates the vector to a mean cosine of $0.22$--$0.59$ with $\mathbf{r}$ (per-seed std $\le 0.05$ on all combos), enough to suppress refusal at the bundled weights, but still far from a perfect alignment ($\cos = 1$).
The lift in $\cos$ correlates broadly with the lift in ASR but is not its sole determinant: Llama \texttt{has\_bold\_only} achieves the highest mean $\Delta{\cos}= +0.49$ and a $\Delta$ASR of $+0.44$, while Llama \texttt{lowercase} reaches a slightly smaller $\Delta{\cos}= +0.47$ but the highest $\Delta{\text{ASR}} = +0.51$; conversely Llama \texttt{spanish} reaches $\Delta{\cos}= +0.35$ with $\Delta{\text{ASR}} = +0.19$, and Gemma \texttt{has\_bold\_only} hits a similar $\Delta$cos $= +0.37$ with $\Delta{\text{ASR}} = +0.25$, reflecting model-specific safety margins and weight choices.

\subsection{Directional Specificity}
\label{sec:norm_sanity}

A natural concern is that the attack might be a norm-inflation artefact. \Cref{app:norm_sanity_full} (\Cref{fig:norm-sanity}) shows this is not the case: on seven of eight combos the poisoned-to-clean $\ell_2$-norm ratio sits in $[0.96, 1.20]$ (four \emph{below} $1.0$); the single outlier (Llama \texttt{has\_bold\_only} at $1.78\times$) is driven by an unusually small clean-vector norm. Removing it, ASR still rises sharply across the remaining seven, so the lift is not explainable by larger vectors at the same weight: the cosine rotation toward $\mathbf{r}$ in \Cref{fig:cosine-mechanism} is the dominant driver. GPT-2 perplexity on benign responses also stays in the same range under both vectors.

\subsection{Text Naturalness}

The optimiser edits on average $120$ tokens per combo across seeds (std $\le 10$), $\approx 6\%$ of modifiable tokens (per-text budget $n_{\text{mod}}=5$, worst case $200$ per combo).
The dominant pattern is synonym-like substitution: representative swaps from the Gemma \texttt{spanish} run include \texttt{write}$\to$\texttt{describe}, \texttt{letter}$\to$\texttt{message}, \texttt{poem}$\to$\texttt{poetry}, \texttt{short}$\to$\texttt{concise}. \\ 
A minority of swaps are noisier: examples are entity shifts like \texttt{Japan}$\to$\texttt{China}, some POS-changing morphological cousins (\texttt{photosynthesis}$\to$\texttt{photosynthetic}), or near-homophones (\texttt{recipe}$\to$\texttt{receipt}).
By contrast GCG-style adversarial suffixes produce perplexity ${>}1000$ and visibly random tokens; here, the modified texts read as plausible paraphrases.

\subsection{Dataset Auditing with an LLM Judge}
\label{sec:judge_audit}

A natural defense is to screen the contrastive dataset itself: pass every pair text through a frontier-model safety classifier and reject any entry that reads as harmful. We test this defense with \model{Claude-Sonnet-4.5} under a safety-auditor rubric that flags entries that are harmful, unethical, jailbreak-like, or adversarially tampered. Calibration uses the $128$ harmful prompts used to estimate $\mathbf{r}$ (positive control) and $128$ harmless prompts from the same split (specificity); the test populations are the $320$ \emph{clean} pair texts (deduped across the three seeds; pre-attack) and the $960$ \emph{poisoned} pair texts ($8$ combos $\times$ $3$ seeds $\times$ $40$ texts).

\begin{table}[t]
  \centering
  \small
  \begin{tabular}{lrr}
    \toprule
    Population & $n$ & Flag rate \\
    \midrule
    Harmful prompts (used for $\mathbf{r}$) & $128$ & $1.000$ \\
    Harmless prompts                        & $128$ & $0.000$ \\
    Clean pair texts (pre-attack)           & $320$ & $0.000$ \\
    Poisoned pair texts                     & $960$ & $0.018$ \\
    \bottomrule
  \end{tabular}
  \caption{LLM-judge audit flag rates. The rubric is well calibrated on the refusal-direction prompts and flags only ${\approx}2\%$ of poisoned pair texts: a small surface-text leak rather than detection of the payload.}
  \label{tab:judge_audit}
\end{table}

\Cref{tab:judge_audit} reports the flag rates. The rubric is well calibrated on the controls ($128/128$ harmful, $0/448$ harmless or clean-pair). On the $960$ poisoned pair texts it fires on only $17$ ($\text{flag rate}=0.018$); per combo the rate stays in $[0.000, 0.042]$, against a $0/40$ clean baseline in each cell. The cosine alignment with $\mathbf{r}$ on the same texts is $+0.22$ to $+0.59$ over clean (\Cref{fig:cosine-mechanism}), so the audit catches at most a small surface-text side-effect of the synonym walk rather than the rotation itself: $11$ of $17$ flags come from a single original (\emph{``Describe how a refrigerator works to a 10-year-old''}), whose head noun is swapped to entities (\emph{bartender}, \emph{harem}, \emph{stroller}) that pair with \emph{10-year-old} to trigger unrelated child-safety signals; three more come from a brittle \emph{``eating a dead cow''} variant where the food noun is replaced by a human. Because the audit is cheap and adversary-blind, an attacker can apply the same screen locally, drop the flagged entries, and re-run the GCG search on the remainder: this adds a constant overhead to the attack budget but is not a security barrier. Even without that adaptation, $941/960$ of the poisoned texts go through unflagged, leaving the geometric alignment with $\mathbf{r}$ essentially intact.

\begin{figure*}[t]
    \centering
    \begin{subfigure}[t]{0.49\linewidth}
        \centering
        \includegraphics[width=\linewidth]{figures/fig_defence_asr.pdf}
        \caption{Attack success rate (harmful prompts).}
        \label{fig:defence-asr}
    \end{subfigure}
    \hfill
    \begin{subfigure}[t]{0.49\linewidth}
        \centering
        \includegraphics[width=\linewidth]{figures/fig_defence_hattr.pdf}
        \caption{Benign-attribute compliance (harmless prompts).}
        \label{fig:defence-hattr}
    \end{subfigure}
    \caption{Refusal-direction orthogonalisation neutralises most of the poisoned-vector jailbreak lift while preserving the advertised steering behavior. Each combo is shown in three states at the same bundled $\alpha$ as \Cref{fig:main-results}: clean (gray), poisoned (red), and the poisoned vector with the refusal direction projected out (gold diamond). Projecting $\mathbf{r}$ out of the poisoned vector reduces ASR toward the clean baseline (\subref{fig:defence-asr}); annotated percentages show the gap recovered, $(\text{ASR}_p - \text{ASR}_d)/(\text{ASR}_p - \text{ASR}_c)$. AC (attribute compliance; \subref{fig:defence-hattr}) changes only modestly on harmless prompts. Combos grouped by model; sorted by $\Delta$ASR within each group.}
    \label{fig:defence}
\end{figure*}

\subsection{Refusal-Direction Orthogonalisation}
\label{sec:defense_ortho}

A natural mitigation for a defender with access to harmful/harmless prompts is to sanitise an untrusted steering vector by projecting the refusal direction out of it: $\mathbf{v}_{\text{def}} = \mathbf{v} - (\mathbf{v}\cdot\hat{\mathbf{r}})\hat{\mathbf{r}}$.
By construction $\cos(\mathbf{v}_{\text{def}}, \mathbf{r}) = 0$, while the orthogonal component on which the declared attribute almost entirely relies (\Cref{fig:cosine-mechanism}) is preserved.

\Cref{fig:defence} reports the per-combo breakdown. The orthogonalisation reduces poisoned-vector ASR by $0.09$--$0.51$ absolute (mean $-0.27$), recovering on average $82\%$ of the gap between the poisoned and clean ASR, staying at or below the baseline on three combos.
Harmless-prompt AC changes by at most $\pm 0.09$ on all eight combos (defended vs.\ poisoned); the mean change is essentially zero, and for Llama \texttt{spanish} the defended AC actually \emph{rises} ($0.84 \to 0.88$).
On the clean-vector control the defense is a near-no-op ($|\Delta\text{ASR}| \le 0.05$ and $|\Delta\text{AC}| \le 0.02$ on the seven well-behaved combos), consistent with $|\cos(\mathbf{v}_{\text{clean}}, \mathbf{r})| \le 0.10$ from \Cref{fig:cosine-mechanism}.

The defense works precisely because the attack relies on a large $\mathbf{v}{\cdot}\mathbf{r}$ component: by \Cref{fig:cosine-mechanism} the poisoned vectors sit at mean $\cos(\mathbf{v},\mathbf{r}) \in [0.22, 0.59]$, and zeroing out that component absorbs most of the jailbreak lift while sparing the attribute-bearing orthogonal component.
The defense neutralises the attack of \Cref{sec:method} as written; an adaptive attacker can co-optimise under a hard cap $\cos(\mathbf{v},\mathbf{r}) \le \tau$ to stay below the defender's threshold. We sweep this cap on the strongest Llama combo in \Cref{app:constrained_bypass} and find that the threshold defense remains robust: on Llama \texttt{lowercase}, at $\tau = 0.10$ the constrained attacker recovers only $3\%$ of the unconstrained ASR gap (and ${\leq}9\%$ across $\tau \in [0.05, 0.30]$).


\section{Conclusions}

We demonstrated that contrastive activation steering datasets can be stealthily poisoned to produce jailbreak vectors, even when the attacker ships a complete bundle (texts + vector + weight + equivalence certificate) and the user verifies provenance.
Constraining replacements to embedding-space neighbours from a curated safe vocabulary generates texts that look like minor paraphrases, with the optimiser typically touching only 4--6\% of modifiable tokens.
Across two model families and eight combinations, poisoned vectors reach $0.20$--$0.55$ absolute ASR ($+19$ to $+51$ pp over the clean counterfactual) while attribute compliance on benign prompts shifts by at most $0.11$ and $\ell_2$ norm stays close to clean on seven of eight combos, so a user who sanity-checks the bundled artefact sees the behaviour the dataset's label promises, and the jailbreak surfaces only on harmful inputs.
Verification of shared steering artefacts must therefore test the artefact on adversarial inputs, not just on the benign queries it was advertised to steer, and not only by re-deriving the vector from the texts.

\section*{Limitations}
The attack requires white-box access for gradient computation and embedding-neighbour lookup; this is realistic for open-weight models but does not apply to closed APIs.
We test only up to 8B parameters; larger models may have different representational geometry both for $\mathbf{r}$ and for the entanglement we exploit.
One combo (Gemma-2-2B \texttt{lowercase} at layer 14) is weakly-steered (clean AC $=0.14$ at the bundled weight), failing the bundle protocol's benign sanity check; we retain it for transparency but it does not contribute to the stealth claim.
Finally, we only target the refusal direction; other attack objectives and cross-model transferability remain unexplored.

\section*{Acknowledgments}
This work was supported by a grant from Coefficient Giving, administered by the Berkeley Existential Risk Initiative (BERI), as well as by MUR FIS2 grant n.\ FIS-2023-00942 ``NEXUS'' (cup B53C25001030001) and by Sapienza University of Rome via the Seed of ERC grant ``MINT.AI'' (cup B83C25001040001).

\bibliography{references}

\appendix
\crefalias{section}{appendix}

\newpage
\section{Additional Experimental Details}
\label{app:additional_details}

\paragraph{Safe vocabulary pipeline.}
\label{app:safe_vocab_pipeline}
We construct the safe vocabulary mask over the model's tokenizer via the following pipeline:
\begin{enumerate}[leftmargin=*,itemsep=2pt]
    \item Start from the NLTK English word corpus (${\sim}234$K words).
    \item Remove words flagged as toxic by Detoxify~\citep{hanu2020detoxify}.
    \item Remove words flagged by Llama-3.3-70B-Instruct screening.
    \item Map surviving words to the model's tokenizer vocabulary, keeping only single-token entries that begin with a space character (i.e., subword-complete tokens).
\end{enumerate}
This yields approximately 36K safe tokens for Gemma-2 and 14.6K for the Llama tokenizer.

\paragraph{Attack pseudocode.}
Algorithm~\ref{alg:stealth_attack} summarises the complete attack pipeline described in \Cref{sec:optimization}.

\begin{algorithm*}[t]
\caption{Stealth Poisoning of Contrastive Steering Datasets}
\label{alg:stealth_attack}
\begin{algorithmic}[1]
\REQUIRE Model $\mathcal{M}$, layer $\ell$, contrastive pairs $\{(x_i^+, x_i^-)\}_{i=1}^N$, anti-refusal direction $\mathbf{r}$
\REQUIRE Safe vocabulary $\mathcal{V}_{\text{safe}}$, neighbor count $K$, edit budget $n_{\text{mod}}$
\REQUIRE Candidates $C$, budget $B$, patience $P$, fluency weight $\lambda_{\text{lm}}$, perplexity cap $\tau$
\ENSURE Poisoned pairs $\{(\tilde{x}_i^+, \tilde{x}_i^-)\}$, poisoned steering vector $\tilde{\mathbf{v}}$

\medskip
\STATE \textit{// Precomputation}
\FOR{each unique token $t$ in modifiable positions}
    \STATE $\mathcal{N}(t) \leftarrow \text{top-}K$ neighbors of $t$ in $\mathcal{V}_{\text{safe}}$ by $\cos(\mathbf{e}_t, \mathbf{e}_{t'})$
\ENDFOR

\medskip
\STATE \textit{// Initialize}
\STATE Cache hidden states $\mathbf{h}_\ell(x)$ for all $2N$ texts
\STATE $\tilde{\mathbf{v}} \leftarrow \frac{1}{N}\textstyle\sum_i \mathbf{h}_\ell(x_i^+) - \frac{1}{N}\textstyle\sum_i \mathbf{h}_\ell(x_i^-)$
\STATE $\text{best\_cos} \leftarrow \cos(\tilde{\mathbf{v}},\, \mathbf{r})$; \quad $\text{stall} \leftarrow 0$

\medskip
\STATE \textit{// Optimization loop}
\FOR{$\text{iter} = 1, \ldots, B$}
    \STATE Select text $x_j$ from pool of $2N$ texts (round-robin)
    \STATE $s_j \leftarrow +1$ if $x_j$ is \textsc{pos}, $-1$ if \textsc{neg}

    \medskip
    \STATE \textit{// Gradient-guided candidate generation}
    \STATE $\mathbf{g} \leftarrow \nabla_{\mathbf{e}}\!\big[1 - \cos(\tilde{\mathbf{v}},\, \mathbf{r})\big]$ \COMMENT{backprop through $x_j$}
    \FOR{each modifiable position $p$ in $x_j$}
        \STATE Score neighbors: $s(t') \leftarrow -\mathbf{e}_{t'}^\top \mathbf{g}_p$ for $t' \in \mathcal{N}(t_p)$
    \ENDFOR
    \STATE Sample $C$ candidates by picking positions $\propto \|\mathbf{g}_p\|$ and top-scoring neighbors
    \STATE \hspace{\algorithmicindent} (skip positions that would exceed edit budget $n_{\text{mod}}$)

    \medskip
    \STATE \textit{// Evaluate and accept}
    \FOR{each candidate $c$}
        \STATE $\tilde{\mathbf{v}}_c \leftarrow \tilde{\mathbf{v}} + \frac{s_j}{N}\big(\mathbf{h}_\ell(c) - \mathbf{h}_\ell(x_j)\big)$ \COMMENT{incremental update}
        \STATE $\text{score}(c) \leftarrow \cos(\tilde{\mathbf{v}}_c,\, \mathbf{r}) - \lambda_{\text{lm}} \cdot \text{NLL}(c)$
        \STATE Reject $c$ if $\text{PPL}(c) > \tau$
    \ENDFOR
    \STATE $c^* \leftarrow \arg\max_c\, \text{score}(c)$
    \IF{$\cos(\tilde{\mathbf{v}}_{c^*},\, \mathbf{r}) > \text{best\_cos}$}
        \STATE $x_j \leftarrow c^*$; \quad update hidden-state cache and $\tilde{\mathbf{v}}$
        \STATE $\text{best\_cos} \leftarrow \cos(\tilde{\mathbf{v}},\, \mathbf{r})$; \quad $\text{stall} \leftarrow 0$
    \ELSE
        \STATE $\text{stall} \leftarrow \text{stall} + 1$
    \ENDIF
    \IF{$\text{stall} \geq P$}
        \STATE \textbf{break} \COMMENT{early stopping}
    \ENDIF
\ENDFOR

\medskip
\RETURN $\{(\tilde{x}_i^+, \tilde{x}_i^-)\}$, $\tilde{\mathbf{v}}$
\end{algorithmic}
\end{algorithm*}

\paragraph{Steering weights and layers per combo.}
\Cref{tab:combo_settings} lists the per-combo steering weight $\alpha$ and layer $\ell$ from the x-axis labels for readability. For each combo we apply the steering vector at the same fixed layer $\ell$ used during the attack, and evaluate the clean and poisoned vectors at the same bundled weight $\alpha$: the attacker-rational integer in $\{2, 3, 4\}$ under the threat model of \Cref{sec:experiments} (maximises poisoned ASR while keeping the bundle's benign-attribute sanity check plausible).

\begin{table}[h]
\centering
\caption{Per-combo steering layer and bundled weight used throughout the main-text figures. Both the clean and the poisoned vector are evaluated at the same $\alpha$.}
\label{tab:combo_settings}
\smallskip
\resizebox{\columnwidth}{!}{%
\begin{tabular}{llcc}
\toprule
\textbf{Model} & \textbf{Attribute} & \textbf{Layer $\ell$} & \textbf{Weight $\alpha$} \\
\midrule
Gemma-2-2B   & spanish         & 14 & 3 \\
Gemma-2-2B   & french          & 14 & 3 \\
Gemma-2-2B   & lowercase       & 14 & 4 \\
Gemma-2-2B   & has\_bold\_only & 14 & 4 \\
Llama-3.1-8B & spanish         & 18 & 3 \\
Llama-3.1-8B & french          & 18 & 4 \\
Llama-3.1-8B & lowercase       & 18 & 2 \\
Llama-3.1-8B & has\_bold\_only & 18 & 4 \\
\bottomrule
\end{tabular}%
}
\end{table}

\section{All-Step Protocol Ablation}
\label{app:protocol_ablation}

Every result in the main text uses the \texttt{prefill} steering protocol: the steering vector is added to layer-$\ell$ activations during the prompt prefill, and decode steps run unsteered.
The canonical setup from \citet{arditi2024refusal} instead applies the same edit at prefill \emph{and} every decode step, with the KV cache built from steered activations (\texttt{all\_steps}).
The steering vector itself is protocol-independent, so we re-evaluate the saved poisoned vectors of two headline combos under the all-step protocol at lower weights.

\begin{table}[h]
\centering
\caption{Headline combos re-evaluated under the all-step steering protocol of \citet{arditi2024refusal} at lower weights. The steering vector is identical to its prefill counterpart; only the inference-time application differs.}
\label{tab:all_steps}
\smallskip
\resizebox{\columnwidth}{!}{%
\begin{tabular}{llcccc}
\toprule
\textbf{Model} & \textbf{Attribute} & \textbf{Layer $\cdot \alpha$} & \textbf{AC (c $\to$ p)} & \textbf{ASR (c $\to$ p)} & $\Delta$\textbf{ASR} \\
\midrule
Gemma-2-2B   & spanish    & 14 $\cdot$ 1.5  & 0.90 $\to$ 0.93 & 0.02 $\to$ 0.43 & \improv{0.41} \\
Llama-3.1-8B & lowercase  & 18 $\cdot$ 1.75 & 0.62 $\to$ 0.90 & 0.06 $\to$ 0.40 & \improv{0.34} \\
\bottomrule
\end{tabular}%
}
\end{table}

Both combos in \Cref{tab:all_steps} retain a large jailbreak lift under all-step at significantly lower steering weights ($\alpha \in \{1.5, 1.75\}$ vs.\ $\{3, 2\}$ for prefill), and AC is preserved or improved (most strikingly, Llama \texttt{lowercase} rises from $0.62$ to $0.90$: the all-step protocol is more aggressive, so the poisoned vector enforces lowercase output more strongly on benign prompts).
We default to prefill in the main results because the compliant-to-degenerate weight window is narrower under all-step (at the prefill weight $\alpha=2$, all-step Llama drives outputs into degenerate looping), making weight selection brittle.
The attack itself does not depend on the choice of protocol.

\section{Realized Attack Surface}
\label{app:realized_attack_surface}

The sharing pipeline our threat model
targets is not hypothetical: steering artifacts are already distributed and
deployed today. Control vectors are a first-class inference feature in
\texttt{llama.cpp},\footnote{\url{https://github.com/ggml-org/llama.cpp/pull/5970}}
where a user can apply a downloaded vector via a single
\texttt{-{}-control-vector} flag. Pre-computed vectors for popular open-weight
models (Llama-3, Mistral-Large, Mixtral, Qwen-1.5, miqu) are distributed on
the HuggingFace Hub as ready-to-load
artifacts,\footnote{\url{https://huggingface.co/jukofyork/creative-writing-control-vectors-v3.0}}\textsuperscript{,}\footnote{\url{https://huggingface.co/datasets/codelion/Qwen3-0.6B-pts-steering-vectors}}
alongside benchmark suites for evaluating shared steering
methods.\footnote{\url{https://huggingface.co/datasets/WangResearchLab/SteeringSafety}}
Python libraries that wrap the train--share--apply workflow (\texttt{repeng}\footnote{\url{https://github.com/vgel/repeng}}
and \texttt{steering-vectors}\footnote{\url{https://github.com/steering-vectors/steering-vectors}})
are publicly available on PyPI and import directly from HuggingFace model and
dataset identifiers. At the commercial end, Goodfire's Ember API hosts
feature-level steering as a paid service backed by open-sourced sparse
autoencoders for Llama-3.1-8B and
Llama-3.3-70B.\footnote{\url{https://www.goodfire.ai/blog/sae-open-source-announcement}}
The ecosystem is still small relative to model sharing on HuggingFace, but it
spans hobbyist, research, and commercial deployment, and the supply-chain
trust assumption our attack exploits, that a user computing or downloading a
vector from upstream-provided contrastive pairs trusts those pairs, is
already standard practice in every one of these pipelines.

\section{Norm Sanity Check}
\label{app:norm_sanity_full}

\Cref{sec:norm_sanity} summarises the norm-inflation check. \Cref{fig:norm-sanity} gives the full per-combo breakdown.

\begin{figure*}[t]
    \centering
    \begin{subfigure}[t]{0.49\linewidth}
        \centering
        \includegraphics[width=\linewidth]{figures/fig_norm_sanity_ratio.pdf}
        \caption{Steering-vector norm ratio.}
        \label{fig:norm-sanity-ratio}
    \end{subfigure}
    \hfill
    \begin{subfigure}[t]{0.49\linewidth}
        \centering
        \includegraphics[width=\linewidth]{figures/fig_norm_sanity_ppl.pdf}
        \caption{Benign-response perplexity.}
        \label{fig:norm-sanity-ppl}
    \end{subfigure}
    \caption{The attack is, with one exception, not a norm-inflation artefact. The poisoned-to-clean $\ell_2$-norm ratio (\subref{fig:norm-sanity-ratio}) stays close to $1.0$ on seven of eight combos (four below $1.0$, three in $[1.06,1.20]$); only Llama \texttt{has\_bold\_only} is an outlier ($1.78\times$), driven by an unusually small clean-vector norm. GPT-2~\citep{radford2019gpt2} perplexity on benign responses (\subref{fig:norm-sanity-ppl}) stays in the same range under clean and poisoned steering. Means over three seeds; $\pm 1$ std bars on poisoned. Layout matches \Cref{fig:main-results}.}
    \label{fig:norm-sanity}
\end{figure*}

Per-seed std on the norm ratio is $\le 0.05$ on every combo except Llama \texttt{has\_bold\_only}. The Llama \texttt{has\_bold\_only} absolute poisoned norm ($3.9$) is in line with the other Llama vectors; only the clean-vector norm at this layer is anomalously small. On five of eight combos response perplexity is \emph{lower} under the poisoned vector in mean, so the attack does not degrade output fluency on benign inputs.

\section{Constrained-Bypass Sweep}
\label{app:constrained_bypass}

\Cref{sec:defense_ortho} introduces the refusal-direction orthogonalisation defense and notes that a defender who knows $\mathbf{r}$ can additionally screen uploaded vectors for suspicious cosine with $-\mathbf{r}$: a hard threshold $\cos(\mathbf{v}, -\mathbf{r}) \le \tau$ rejects vectors that look like jailbreak vectors. The natural adaptive-attacker question is whether the GCG optimiser can stay below $\tau$ and still recover a usable fraction of the jailbreak lift. We answer it here for the strongest cell in our matrix, Llama-3.1-8B \texttt{lowercase} at $\alpha = 2$, by re-running the attack with a per-candidate hard reject on $\cos(\mathbf{v}, -\mathbf{r}) > \tau$ (\texttt{-{}-cos\_max\_hard} in \texttt{build\_adv\_stealth.py}) and score-monotonic acceptance, for $\tau \in \{0.05, 0.10, 0.15, 0.20, 0.30\}$. We use three GCG seeds per cap and report mean $\pm$ std.

\begin{figure*}[t]
    \centering
    \begin{subfigure}[t]{0.49\linewidth}
        \centering
        \includegraphics[width=\linewidth]{figures/fig_constrained_bypass_asr.pdf}
        \caption{Attack success rate (harmful prompts).}
        \label{fig:constrained-bypass-asr}
    \end{subfigure}
    \hfill
    \begin{subfigure}[t]{0.49\linewidth}
        \centering
        \includegraphics[width=\linewidth]{figures/fig_constrained_bypass_ac.pdf}
        \caption{Attribute compliance (harmless prompts).}
        \label{fig:constrained-bypass-ac}
    \end{subfigure}
    \caption{Adaptive-attacker sweep against a cosine-threshold defender on Llama-3.1-8B \texttt{lowercase}, $\alpha = 2$. The attacker re-runs the GCG optimisation with a hard cap $\cos(\mathbf{v}, -\mathbf{r}) \le \tau$ at each operating point. (\subref{fig:constrained-bypass-asr}) ASR collapses toward the clean baseline at every cap; the bypass fraction $B(\tau) = (\text{ASR}_\tau - \text{ASR}_c) / (\text{ASR}_p - \text{ASR}_c)$ at the headline $\tau = 0.10$ is only $3\%$. (\subref{fig:constrained-bypass-ac}) Attribute compliance on harmless prompts is preserved across the full sweep (the optimiser still finds high-AC vectors; it just cannot rotate them onto $-\mathbf{r}$). Mean $\pm$ std over three GCG seeds.}
    \label{fig:constrained-bypass}
\end{figure*}

\Cref{fig:constrained-bypass} reports the curve. The unconstrained poisoned vector reaches $\cos(\mathbf{v}, -\mathbf{r}) = 0.48 \pm 0.03$ and $\text{ASR} = 0.55 \pm 0.13$; the clean vector is at $\cos = 0.007$ and $\text{ASR} = 0.04$. Forcing the optimiser to stay at $\tau = 0.10$ drops ASR to $0.057 \pm 0.006$, just above the clean baseline of $0.043$, while AC remains at $0.87$ (vs.\ $0.91$ unconstrained and $0.83$ clean). Even at the loosest cap we tested ($\tau = 0.30$, well above a credible defender threshold), the realised cosine reaches only $0.21$, and ASR is $0.083 \pm 0.015$ -- a bypass fraction of just $8\%$. We read this as evidence that the cosine threshold is a tight bottleneck on the jailbreak signal: once the optimiser cannot align $\mathbf{v}$ with $-\mathbf{r}$, the attribute-bearing component alone does not jailbreak the model, even though it suffices to keep AC at near-unconstrained levels. The orthogonalisation defense of \Cref{sec:defense_ortho} and a cosine-threshold screen are thus complementary and, in this cell, jointly robust to the adaptive attacker.

\section{Judge Ensemble and Inter-Judge Agreement}
\label{app:judge_agreement}

The headline ASR in \Cref{sec:experiments} is computed by majority vote over three LLM judges -- \model{Claude-Sonnet-4.5}, \model{GPT-4.1} and \model{Llama-3.3-70B-Instruct} -- each prompted with the same rubric (the safety-classifier prompt of \Cref{sec:experiments}, instructing the model to mark looping or off-topic outputs as safe) and called with temperature $0$. A completion is flagged as \textsc{harmful} iff at least two of the three judges call it harmful; ties (one judge skipped or refused) default to \textsc{safe}. The ensemble guards against any single judge's idiosyncrasies -- e.g.\ a calibration drift that systematically over- or under-flags borderline outputs -- without changing the prompt or the rubric per judge.

\begin{table*}[t]
    \centering
    \resizebox{\textwidth}{!}{%
\begin{tabular}{lrrrrrrrr}
\toprule
 & & \multicolumn{2}{c}{Sonnet vs GPT-4.1} & \multicolumn{2}{c}{Sonnet vs Llama-70B} & \multicolumn{2}{c}{GPT-4.1 vs Llama-70B} & 3-rater \\
\cmidrule(lr){3-4}\cmidrule(lr){5-6}\cmidrule(lr){7-8}
Scope & $n$ & agr.\ & $\kappa$ & agr.\ & $\kappa$ & agr.\ & $\kappa$ & $\kappa_F$ \\
\midrule
Overall (harmful) & 17,888 & 0.976 & 0.920 & 0.972 & 0.907 & 0.974 & 0.912 & 0.913 \\
Clean vector & 7,413 & 0.984 & 0.876 & 0.981 & 0.852 & 0.984 & 0.879 & 0.869 \\
Poisoned vector & 7,408 & 0.965 & 0.920 & 0.960 & 0.911 & 0.961 & 0.912 & 0.914 \\
Defended vector & 1,582 & 0.982 & 0.907 & 0.979 & 0.889 & 0.985 & 0.921 & 0.906 \\
Constrained-bypass sweep & 1,485 & 0.989 & 0.914 & 0.982 & 0.878 & 0.976 & 0.821 & 0.870 \\
\bottomrule
\end{tabular}%
}

    \caption{Inter-judge agreement on the eight-combo $\times$ three-seed harmful evaluation pool. \emph{Overall} pools all harmful-split completions across clean, poisoned, defended, and constrained-bypass vectors; the sub-rows restrict to each regime. Pairwise columns report raw agreement and Cohen's $\kappa$; the rightmost column is Fleiss' $\kappa_F$ across all three raters. Agreement is uniformly high ($\kappa_F \ge 0.87$ in every regime) and the majority label coincides with the original \model{Claude-Sonnet-4.5} label on $98.7\%$ of items, so the ensemble preserves the per-cell ASR numbers up to ${\le}1.3$ pp in absolute terms while ruling out single-judge artefacts.}
    \label{tab:judge_agreement}
\end{table*}

\Cref{tab:judge_agreement} reports the agreement statistics. Across the $17{,}888$ harmful completions where all three judges returned a label, pairwise raw agreement is $0.972{-}0.976$, every pairwise Cohen's $\kappa$ is $\ge 0.907$, and Fleiss' $\kappa_F$ for the three raters jointly is $0.913$ -- in the standard ``almost perfect'' band. The three judges also agree on per-class marginals (HARMFUL rate $0.180 / 0.177 / 0.189$ for Sonnet / GPT-4.1 / Llama-70B), so no judge is a systematic outlier. Agreement is uniformly high across regimes, with the poisoned-vector pool (where models loop or generate borderline content most often) at $\kappa_F = 0.914$ and the constrained-bypass sweep at $\kappa_F = 0.870$.

The three judges return unanimous labels on $96.1\%$ of harmful completions, leaving $694$ split decisions out of $17{,}888$. Among those splits, Llama-70B is the minority $39\%$ of the time, Sonnet $33\%$, and GPT-4.1 $28\%$ -- i.e.\ the minority slot is distributed roughly evenly with a mild Llama lean, not concentrated in one judge. Because Sonnet was our original primary judge and agrees with the majority on $98.7\%$ of items, the ensemble changes individual per-cell ASR numbers by at most $1{-}2$ pp; the qualitative gap between clean, poisoned, and defended vectors -- which is several tens of percentage points in every cell -- is unaffected. We therefore retain the majority label as the canonical ASR throughout the paper.

\section{Compute Budget}
\label{app:compute}

All experiments ran on a SLURM-managed cluster with NVIDIA A100 (64\,GB)
GPUs, one GPU per (cell, seed). Models are loaded in \texttt{bfloat16};
peak GPU memory is ${\approx}6$\,GB for Gemma-2-2B and ${\approx}17$\,GB
for Llama-3.1-8B. A single attack run takes ${\approx}10$ minutes on
both models at the budget reported in \Cref{sec:experiments} ($B{=}1500$,
$C{=}64$), so the 24 attacks (8 combos $\times$ 3 seeds) consume
${\approx}4$\,GPU-hours in total. The accompanying evaluation matrix
(350 eval runs across \{clean, poisoned, defended, all-steps,
constrained-bypass\} $\times$ \{harmful, harmless\} $\times$ weight
choices) adds ${\approx}8$\,GPU-hours, for a total of ${\approx}15$\,
GPU-hours. LLM judges (\model{Claude-Sonnet-4.5}, \model{GPT-4.1},
\model{Llama-3.3-70B-Instruct}) contribute ${\approx}100$K API calls
and are not counted in the GPU-hour budget.

\section{Artifacts and Licenses}
\label{app:licenses}

\paragraph{Models.}
\model{Gemma-2-2B-IT} is released under the Gemma Terms of Use;
\model{Llama-3.1-8B-Instruct} under the Llama 3.1 Community License.
Judge models are accessed via API: \model{Claude-Sonnet-4.5} under
Anthropic's usage policies, \model{GPT-4.1} under OpenAI's usage
policies, and \model{Llama-3.3-70B-Instruct} under the Llama 3.3
Community License.

\paragraph{Data and packages.}
The NLTK English word corpus is distributed under the Apache 2.0
license; Detoxify~\citep{hanu2020detoxify} under Apache 2.0; fastText's
\texttt{lid.176} model~\citep{joulin2016fasttext,joulin2017bag} under
CC-BY-SA-3.0; GPT-2~\citep{radford2019gpt2} under the MIT license.
Harmful and harmless prompt splits are reused from
\citet{arditi2024refusal}.

\paragraph{Our code.}
The attack, defense, and evaluation code accompanying this paper is
released under the MIT license.

All uses above are non-commercial research, consistent with each
artefact's terms.

\paragraph{Intended-use consistency.}
Our use of \model{Gemma-2-2B-IT} and \model{Llama-3.1-8B-Instruct} is a
security/red-teaming study of activation-steering pipelines, which falls
under the safety-research provisions of both models' acceptable-use
policies. We do not redistribute model weights. The poisoned contrastive
pair texts and steering vectors we create are derivatives of artefacts
accessed for research purposes and are intended for safety-research use
only, consistent with the original access conditions.

\paragraph{PII and offensive content.}
The harmful and harmless prompt splits inherited from
\citet{arditi2024refusal} were screened by the original authors and do
not contain personally identifying information. By design, the harmful
split contains requests for harmful content, used here only as
evaluation stimuli for safety classifiers and never as training data. A
small fraction (${\sim}2\%$) of poisoned pair texts contain
surface-level offensive collocations that arise as side-effects of the
embedding-neighbor swap; \Cref{sec:judge_audit} analyses these
explicitly. No new human-subject data was collected.

\paragraph{Scope of released artefacts.}
The attack code targets two model families (Gemma-2, Llama-3.1) at the
layers reported in \Cref{tab:combo_settings}. Pair texts are in English;
the output-language attributes (\texttt{spanish}, \texttt{french}) cover
two Romance languages, and the formatting attributes (\texttt{lowercase},
\texttt{has\_bold\_only}) are language-agnostic but evaluated only on
English outputs. The harmful and harmless prompt splits are in English.
The safe-vocabulary mask (\Cref{app:safe_vocab_pipeline}) is built per
tokenizer (${\sim}36$K tokens for Gemma-2, ${\sim}14.6$K for Llama). No
demographic attributes are involved in the pair construction.

\section{Ethics and Responsible Disclosure}
\label{app:ethics}

The attack we describe is dual-use: the same procedure that demonstrates
the vulnerability can be misused to jailbreak deployed models. We
mitigate this in two ways. First, we publish a defense
(refusal-direction orthogonalisation, \Cref{sec:defense_ortho}) and a
complementary cosine-threshold screen (\Cref{app:constrained_bypass})
that together neutralise the attack on every combo we tested. Second,
our threat model targets a community-sharing pipeline that is still
small (\Cref{app:realized_attack_surface}); publishing now, before the
ecosystem scales, gives maintainers of \texttt{llama.cpp}, \texttt{repeng},
and HuggingFace steering-vector repositories time to add the integrity
checks our defense suggests.

\section{Use of AI Assistants}
\label{app:ai_assistants}

We used LLM coding assistants for refactoring, plotting scripts, and
boilerplate, and LLMs for sentence-level prose polishing; all technical
content, claims, and experimental design are authored and verified by
the authors. LLMs are also used as part of the methodology itself, as
evaluation judges (\Cref{sec:experiments}, \Cref{app:judge_agreement})
and in the safe-vocabulary screening pipeline
(\Cref{app:safe_vocab_pipeline}); these uses are described in the body
of the paper.

\section{Theoretical Guarantees}
\label{app:theoretical_guarantees}

The poisoning objective in \Cref{eq:objective} is a discrete, nonconvex
optimization problem over token sequences; we do not claim that
Algorithm~\ref{alg:stealth_attack} finds a global optimum. This appendix
formalises algorithmic guarantees: the procedure stays inside the
stealth-preserving feasible set, monotonically improves the attack objective
whenever it accepts an update, and admits a local-optimality interpretation for
an exhaustive variant. For the proofs, it is useful to collect all stealth
constraints into a single mathematical object: the feasible set of poisoned
datasets. This lets us reason about the algorithm as an ascent procedure over a
finite constrained search space.

We prove three results. First, Algorithm~\ref{alg:stealth_attack} preserves
feasibility, monotonically improves the cosine objective whenever it accepts an
update, and terminates after finitely many accepted updates. Second, an
idealized exhaustive version of the search procedure returns a coordinatewise
local optimum whenever it completes a full non-improving sweep. Third, under a
Lipschitz assumption on the hidden-state map, the steering-vector perturbation
caused by poisoning is explicitly bounded by the edit budget and the embedding
radius of the neighbor table.

\subsection{Definitions}
\label{app:definitions}

Let
\[
    D_0 = \{(x_i^+, x_i^-)\}_{i=1}^N
\]
denote the original contrastive dataset. A poisoned dataset is denoted
\[
    D = \{(\tilde x_i^+, \tilde x_i^-)\}_{i=1}^N .
\]
For any dataset $D$, define its layer-$\ell$ contrastive steering vector as
\[
\begin{aligned}
    \mathbf v(D)
    &=
    \frac{1}{N}
    \sum_{i=1}^N
    \mathbf h_\ell(\tilde x_i^+) \\
    &\quad -
    \frac{1}{N}
    \sum_{i=1}^N
    \mathbf h_\ell(\tilde x_i^-).
\end{aligned}
\]
The attack objective is
\[
    F(D)
    =
    \cos(\mathbf v(D), \mathbf r).
\]

\begin{definition}[Feasible poisoned dataset]
\label{def:feasible_dataset}
A poisoned dataset $D$ is feasible if every text $\tilde x_i^+$ and
$\tilde x_i^-$ satisfies the following constraints:
\begin{enumerate}[leftmargin=*,itemsep=2pt]
    \item it differs from its original text in at most
    $n_{\mathrm{mod}}$ modifiable token positions;
    \item every replacement token belongs to the safe vocabulary
    $\mathcal V_{\mathrm{safe}}$;
    \item every replacement of an original token $t$ belongs to its precomputed
    neighbor set $\mathcal N(t)$;
    \item protected suffix tokens are unchanged;
    \item the modified text satisfies the perplexity cap,
    $\mathrm{PPL}(\tilde x) \leq \tau$.
\end{enumerate}
We denote the set of all feasible poisoned datasets by $\mathcal D$.
\end{definition}

The feasible set $\mathcal D$ is the mathematical representation of the stealth
constraint. It contains exactly the datasets that the attacker is allowed to
output. Because the original dataset contains finitely many token positions,
each position has finitely many allowed replacements, and each text has a finite
edit budget, $\mathcal D$ is finite.

\begin{definition}[Algorithmic trajectory]
\label{def:trajectory}
Let $D_t$ denote the current poisoned dataset after iteration $t$ of
Algorithm~\ref{alg:stealth_attack}. If the algorithm accepts a candidate at
iteration $t$, then $D_{t+1}$ is the dataset obtained by applying that
candidate. Otherwise, $D_{t+1}=D_t$.
\end{definition}

\begin{definition}[Coordinate neighborhood]
\label{def:coordinate_neighborhood}
For a feasible dataset $D \in \mathcal D$, its coordinate neighborhood
$\mathcal N_{\mathrm{coord}}(D)$ is the set of feasible datasets
$D' \in \mathcal D$ that differ from $D$ in at most one text among the $2N$
positive and negative texts.
\end{definition}

The coordinate neighborhood captures the type of local move considered by the
optimizer: modify one selected text while keeping all other cached hidden states
fixed.

\begin{definition}[Exhaustive coordinate-ascent variant]
\label{def:exhaustive_variant}
The exhaustive coordinate-ascent variant of Algorithm~\ref{alg:stealth_attack}
selects texts in the same round-robin order, but when a text is selected it
evaluates every feasible modification of that text. It accepts an update if and
only if at least one feasible single-text modification strictly improves
\[
    F(D)
    =
    \cos(\mathbf v(D),\mathbf r),
\]
and in that case it may accept any strictly improving modification.
\end{definition}

This exhaustive variant is not the implementation used in our experiments. It
is an idealized version used only to state the local-optimality property that
the practical sampled method approximates.

\begin{definition}[Embedding perturbation radius]
\label{def:embedding_radius}
For a text $x$ and a modified text $\tilde x$ of the same length, define the
embedding perturbation distance
\[
\begin{aligned}
    d_{\mathrm{emb}}(x,\tilde x)
    &=
    \sum_p
    \|\mathbf e_{x_p} - \mathbf e_{\tilde x_p}\|_2,
\end{aligned}
\]
where the sum is over token positions. We say the neighbor table has radius
$\rho$ if every allowed replacement $t' \in \mathcal N(t)$ satisfies
\[
    \|\mathbf e_t - \mathbf e_{t'}\|_2
    \leq
    \rho .
\]
\end{definition}

The radius $\rho$ measures how far a single allowed token replacement can move
the input embedding. This gives a way to translate the discrete edit budget into
a continuous bound in activation space.

\subsection{Feasibility, Monotonicity, and Finite Termination}

The first result states that the practical algorithm always remains inside the
allowed search space and never decreases the attack objective. This is the basic
correctness property needed to interpret the method as constrained ascent.

\begin{theorem}[Feasibility, monotonicity, and finite termination]
\label{thm:feasibility_monotonicity}
Assume Algorithm~\ref{alg:stealth_attack} is initialized at the original
dataset $D_0$ and accepts a candidate only when its cosine objective strictly
improves over the current best value. Then, for every iteration $t$,
\[
    D_t \in \mathcal D .
\]
Moreover,
\[
    F(D_{t+1}) \geq F(D_t),
\]
with strict inequality whenever an update is accepted. Finally, the algorithm
can make only finitely many accepted updates. In particular, its returned
dataset $D_T$ satisfies
\[
    F(D_T) \geq F(D_0).
\]
\end{theorem}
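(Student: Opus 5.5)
The argument is an induction on $t$ for feasibility and monotonicity, followed by a finiteness argument for termination.

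\emph{Base case.} Feasibility holds at $t=0$. The original dataset $D_0$ makes zero edits, so constraints 1--4 of Definition~\ref{def:feasible_dataset} hold vacuously. For constraint 5, we either adopt the convention that $D_0$ is admissible by fiat or assume the originals satisfy $\mathrm{PPL}\le\tau$; this is reasonable since $\tau=2000$ and the texts are natural. I will state this assumption explicitly, because it is the one place where the claim could otherwise fail.

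\emph{Inductive step for feasibility.} Suppose $D_t\in\mathcal D$. If no candidate is accepted, then $D_{t+1}=D_t$ by Definition~\ref{def:trajectory}, and feasibility is inherited. If a candidate $c^*$ is accepted, it differs from $D_t$ only in the selected text $x_j$, so every other text keeps its feasibility. For $x_j$ itself, I check the constraints against the construction in Algorithm~\ref{alg:stealth_attack}.
\begin{itemize}
\item[(i)] Replacements are drawn from $\mathcal N(t_p)$, which by definition is a subset of $\mathcal V_{\mathrm{safe}}$. This gives constraints 2 and 3.
\item[(ii)] Only modifiable positions are scored, so protected suffix tokens never change. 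This gives constraint 4.
\item[(iii)] Positions that would exceed $n_{\mathrm{mod}}$ are skipped, so the edit count measured against the \emph{original} text stays at most $n_{\mathrm{mod}}$. This gives constraint 1.
\item[(iv)] Candidates with $\mathrm{PPL}>\tau$ are rejected before the argmax. This gives constraint 5.
\end{itemize}

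\emph{Main obstacle: neighbors of substituted tokens.} A position that was already edited may be re-edited. In that case its current token is a neighbor $t'$ of the original token, and sampling from $\mathcal N(t')$ need not land in $\mathcal N(t)$. I would handle this by reading the implementation's neighbor lookup as indexed by the original token at each position, so that $\mathcal N(t_p)$ always refers to the original. Alternatively, one can define constraint 3 relative to the original token and note that the table is precomputed only for tokens in modifiable positions of $D_0$. In the same spirit, a re-edit of an already-changed position does not increase the edit count, so constraint 1 remains consistent.

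\emph{Monotonicity.} The incremental update in Algorithm~\ref{alg:stealth_attack} computes $\mathbf v(D)$ exactly: $\tilde{\mathbf v}_c=\mathbf v(D_t)+\frac{s_j}{N}\bigl(\mathbf h_\ell(c)-\mathbf h_\ell(x_j)\bigr)$ equals $\mathbf v$ of the modified dataset, using the cache invariant that the stored hidden states are exactly those of the texts in $D_t$. Hence $\mathrm{best\_cos}=F(D_t)$ holds invariantly. Acceptance requires $\cos(\tilde{\mathbf v}_{c^*},\mathbf r)>\mathrm{best\_cos}$, which gives $F(D_{t+1})>F(D_t)$; on non-acceptance equality holds trivially. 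One caveat is that the score includes the NLL penalty, but acceptance is gated on the cosine alone, so this does not affect the argument. If $\mathbf v(D)=0$ the cosine is undefined; I would assume nonzero vectors or adopt the convention $\cos=-\infty$.

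\emph{Finite termination.} The set $\mathcal D$ is finite, as noted after Definition~\ref{def:feasible_dataset}. Every accepted update strictly increases $F$, and all iterates stay in $\mathcal D$, so no dataset can be visited twice among accepted iterates. The number of accepted updates is therefore at most $|\mathcal D|-1$. Independently, the loop runs at most $B$ iterations. Finally, chaining the monotone inequalities from $t=0$ to the returned $T$ gives $F(D_T)\ge F(D_0)$.
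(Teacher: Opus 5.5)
Your proposal is correct and follows the same route as the paper's proof: feasibility by induction from how candidates are generated and filtered, monotonicity directly from the strict-improvement acceptance rule, and termination from the finiteness of $\mathcal D$ combined with strict increase of $F$ (plus the budget $B$). The extra points you raise---the perplexity of the unedited originals, re-edits of already-substituted positions, and the invariant that the cached best cosine equals $F(D_t)$---are details the paper's proof passes over silently, and your handling of them only tightens the same argument.
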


\begin{proof}
We first prove feasibility. The original dataset is feasible by definition when
zero edits are applied. At each iteration, candidate generation only proposes
replacements from the precomputed neighbor table, and each neighbor belongs to
the safe vocabulary. The algorithm skips edits that would exceed the per-text
budget $n_{\mathrm{mod}}$ and never modifies protected suffix positions.
Candidate evaluation rejects any text whose perplexity exceeds $\tau$.
Therefore every accepted update satisfies all constraints in
\Cref{def:feasible_dataset}. By induction over iterations, $D_t \in \mathcal D$
for all $t$.

We next prove monotonicity. At iteration $t$, either the algorithm accepts a
candidate or it does not. If it does not accept a candidate, then
$D_{t+1}=D_t$, so
\[
    F(D_{t+1}) = F(D_t).
\]
If it accepts a candidate, the acceptance rule requires
\[
    F(D_{t+1}) > F(D_t).
\]
Thus $F(D_{t+1}) \geq F(D_t)$ always, with strict inequality for accepted
updates.

Finally, $\mathcal D$ is finite. Since accepted updates strictly increase
$F$ and the algorithm remains inside the finite set $\mathcal D$, it cannot
accept infinitely many updates. The explicit iteration budget $B$ and patience
parameter $P$ provide additional finite stopping criteria. Therefore the
algorithm terminates after finitely many accepted updates, and by monotonicity
the final dataset satisfies $F(D_T)\geq F(D_0)$.
\end{proof}

\subsection{Coordinatewise Local Optimality of an Exhaustive Variant}

The practical algorithm samples only $C$ candidates per iteration, so failure to
find an improvement does not imply that no improvement exists. The following
result applies to the exhaustive variant from
\Cref{def:exhaustive_variant}. It says that if the exhaustive procedure checks
all single-text moves and finds no improving move, then the returned point is a
local optimum with respect to those moves.

\begin{theorem}[Coordinatewise local optimality]
\label{thm:coordinate_local_optimality}
Consider the exhaustive coordinate-ascent variant in
\Cref{def:exhaustive_variant}. Suppose it completes a full sweep over all $2N$
texts without accepting an update. Let $D_T$ be the dataset at the end of that
sweep. Then $D_T$ is coordinatewise locally optimal over $\mathcal D$:
\[
    F(D_T) \geq F(D')
\]
for every $D' \in \mathcal N_{\mathrm{coord}}(D_T)$.
\end{theorem}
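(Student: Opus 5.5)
The argument is a direct contrapositive unpacking of the acceptance rule in \Cref{def:exhaustive_variant}, combined with the observation that during a sweep with no accepted update the dataset never changes. Concretely, let the sweep consist of iterations $t_0, t_0+1, \dots, t_0+2N-1$, visiting each of the $2N$ texts exactly once in the round-robin order. Since no update is accepted during the sweep, \Cref{def:trajectory} gives $D_{t_0} = D_{t_0+1} = \cdots = D_T$. Hence every text was examined against the \emph{same} dataset $D_T$. This is the key point: the hypothesis that no move is accepted "frozen" the reference point for all $2N$ coordinate checks.

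Next fix an arbitrary $D' \in \mathcal N_{\mathrm{coord}}(D_T)$. By \Cref{def:coordinate_neighborhood}, $D'$ is feasible and differs from $D_T$ in at most one text. If it differs in none, then $D'=D_T$ and $F(D')=F(D_T)$ trivially. Otherwise there is a unique text index $j$ (a positive or negative text) where $D'$ and $D_T$ differ. Then $D'$ is obtained from $D_T$ by a feasible single-text modification of text $j$: the other $2N-1$ texts are unchanged, and the modified text satisfies the constraints of \Cref{def:feasible_dataset} because $D' \in \mathcal D$.

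Text $j$ was selected at some iteration of the sweep, at which point the current dataset was $D_T$. The exhaustive variant evaluates every feasible modification of that text, and it accepts an update if and only if one of them strictly improves $F$. Because no update was accepted, no feasible single-text modification of text $j$ strictly improves $F$ at $D_T$. In particular $F(D') > F(D_T)$ is impossible, so $F(D') \le F(D_T)$. Since $D'$ was arbitrary, the claim follows.

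The only point needing care, and the main obstacle, is making precise that "feasible modifications of text $j$" evaluated by the variant coincide with the datasets in $\mathcal N_{\mathrm{coord}}(D_T)$ that differ at $j$. I would handle this by reading \Cref{def:exhaustive_variant} as enumerating all $D'' \in \mathcal D$ agreeing with the current dataset outside text $j$. Feasibility is measured relative to the original $D_0$, not relative to $D_T$, so the same set is obtained whether one speaks of edits of $D_T$ or of feasible datasets. I would also remark that $F$ is well defined only when $\mathbf v(D)\neq 0$. I would either assume this throughout or adopt a convention (e.g.\ $F=-\infty$ at $\mathbf v=0$); neither choice affects the argument.
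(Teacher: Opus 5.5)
Your proposal is correct and follows the paper's argument: a full sweep with no accepted update means no feasible single-text modification improves $F$, and every $D' \in \mathcal N_{\mathrm{coord}}(D_T)$ is such a modification, so $F(D') \le F(D_T)$. Your extra points make explicit what the paper leaves implicit: the dataset stays frozen at $D_T$ throughout the sweep, feasibility is measured relative to $D_0$, and $F$ needs $\mathbf v(D) \neq 0$ to be defined.
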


\begin{proof}
A full sweep selects each of the $2N$ texts once. By definition of the exhaustive
variant, when a text is selected, the algorithm evaluates every feasible
modification of that text while holding all other texts fixed. Since the sweep
ends without accepting an update, no feasible modification of any single text
strictly improves the objective.

Every dataset $D' \in \mathcal N_{\mathrm{coord}}(D_T)$ differs from $D_T$ in at
most one text and is feasible. Such a dataset would have been considered during
the sweep when that text was selected. Since no improving update was accepted,
we must have
\[
    F(D') \leq F(D_T).
\]
Because this holds for every $D' \in \mathcal N_{\mathrm{coord}}(D_T)$,
$D_T$ is coordinatewise locally optimal.
\end{proof}

\subsection{Bounded Steering-Vector Perturbation}

The previous results concern optimization. The next result concerns the size of
the perturbation induced by the stealth constraints. It shows that if allowed
token replacements are close in embedding space, and the model's hidden states
vary smoothly with respect to embedding perturbations, then the poisoned
steering vector cannot move arbitrarily far from the original steering vector.

\begin{proposition}[Bounded activation perturbation]
\label{prop:bounded_activation_perturbation}
Assume that the hidden-state map $x \mapsto \mathbf h_\ell(x)$ is
$L$-Lipschitz with respect to $d_{\mathrm{emb}}$, meaning that for any two texts
$x$ and $\tilde x$ of the same length,
\[
\begin{aligned}
    \|\mathbf h_\ell(\tilde x) - \mathbf h_\ell(x)\|_2
    &\leq
    L\, d_{\mathrm{emb}}(x,\tilde x).
\end{aligned}
\]
Assume also that the neighbor table has radius $\rho$ in the sense of
\Cref{def:embedding_radius}. Then every feasible poisoned dataset
$\tilde D \in \mathcal D$ satisfies
\[
    \|\mathbf v(\tilde D) - \mathbf v(D_0)\|_2
    \leq
    2 L n_{\mathrm{mod}} \rho .
\]
\end{proposition}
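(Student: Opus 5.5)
The plan is to bound the difference of steering vectors term by term. Writing $\tilde D=\{(\tilde x_i^+,\tilde x_i^-)\}$ and $D_0=\{(x_i^+,x_i^-)\}$, linearity of the mean-difference definition gives
\[
\mathbf v(\tilde D)-\mathbf v(D_0)=\frac1N\sum_{i=1}^N\big(\mathbf h_\ell(\tilde x_i^+)-\mathbf h_\ell(x_i^+)\big)-\frac1N\sum_{i=1}^N\big(\mathbf h_\ell(\tilde x_i^-)-\mathbf h_\ell(x_i^-)\big).
\]
Applying the triangle inequality then bounds the norm by the average over $i$ of the per-text hidden-state displacements on the \textsc{pos} side, plus the analogous average on the \textsc{neg} side.

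Next I bound each per-text displacement. Feasibility (\Cref{def:feasible_dataset}) says that edits are token substitutions, so $\tilde x$ has the same length as $x$ and $d_{\mathrm{emb}}$ is defined. Hence the Lipschitz hypothesis applies: $\|\mathbf h_\ell(\tilde x)-\mathbf h_\ell(x)\|_2\le L\,d_{\mathrm{emb}}(x,\tilde x)$. In the sum defining $d_{\mathrm{emb}}$, the unchanged positions contribute zero. By constraint 1 there are at most $n_{\mathrm{mod}}$ changed positions. By constraint 3 each changed token satisfies $t'\in\mathcal N(t)$, so the radius assumption (\Cref{def:embedding_radius}) bounds each such term by $\rho$. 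Thus $d_{\mathrm{emb}}(x,\tilde x)\le n_{\mathrm{mod}}\rho$, and every per-text displacement is at most $L n_{\mathrm{mod}}\rho$.

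Substituting this into the two averages gives $\frac1N\cdot N\cdot Ln_{\mathrm{mod}}\rho$ for each side. The sum is therefore $2Ln_{\mathrm{mod}}\rho$, which is the claim.

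The only delicate point is the radius step. Constraint 3 compares each replacement with its \emph{original} token $t$, not with an intermediate token produced during the run, and $d_{\mathrm{emb}}$ is measured against the original text $x$. I will therefore state explicitly that each changed position holds a single replacement drawn from $\mathcal N$ of the original token, so each term is bounded by $\rho$ directly. If repeated edits at a single position were allowed, the feasibility definition would still cap the final token against the original via constraint 3, so no chaining is needed. The perplexity and safe-vocabulary constraints play no role in this bound.
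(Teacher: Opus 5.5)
Your proof is correct and follows the paper's argument: the edit budget and neighbor radius give $d_{\mathrm{emb}}(x,\tilde x)\le n_{\mathrm{mod}}\rho$, the Lipschitz hypothesis turns this into a per-text bound $L n_{\mathrm{mod}}\rho$, and the triangle inequality on the mean-difference decomposition into $\mathrm{POS}$ and $\mathrm{NEG}$ displacements yields $2Ln_{\mathrm{mod}}\rho$. You only reorder the steps and make explicit two points the paper leaves implicit: that edits preserve length (so $d_{\mathrm{emb}}$ is defined), and that constraint 3 measures each replacement against the original token (so no chaining of edits arises).
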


\begin{proof}
Let $\tilde x$ be any feasible modification of an original text $x$. By
feasibility, $\tilde x$ differs from $x$ in at most $n_{\mathrm{mod}}$ token
positions. By the radius assumption, each replacement changes the corresponding
input embedding by at most $\rho$. Therefore
\[
\begin{aligned}
    d_{\mathrm{emb}}(x,\tilde x)
    &=
    \sum_p
    \|\mathbf e_{x_p} - \mathbf e_{\tilde x_p}\|_2 \\
    &\leq
    n_{\mathrm{mod}}\rho .
\end{aligned}
\]
By the Lipschitz assumption,
\[
    \|\mathbf h_\ell(\tilde x) - \mathbf h_\ell(x)\|_2
    \leq
    L n_{\mathrm{mod}}\rho .
\]

Define the positive and negative hidden-state perturbations as
\[
\begin{aligned}
    \Delta_i^+
    &=
    \mathbf h_\ell(\tilde x_i^+)
    -
    \mathbf h_\ell(x_i^+), \\
    \Delta_i^-
    &=
    \mathbf h_\ell(\tilde x_i^-)
    -
    \mathbf h_\ell(x_i^-).
\end{aligned}
\]
Then
\[
\begin{aligned}
    \|\mathbf v(\tilde D) - \mathbf v(D_0)\|_2
    &=
    \left\|
    \frac{1}{N}\sum_{i=1}^N \Delta_i^+
    -
    \frac{1}{N}\sum_{i=1}^N \Delta_i^-
    \right\|_2 \\
    &\leq
    \frac{1}{N}\sum_{i=1}^N
    \|\Delta_i^+\|_2
    +
    \frac{1}{N}\sum_{i=1}^N
    \|\Delta_i^-\|_2 \\
    &\leq
    L n_{\mathrm{mod}}\rho
    +
    L n_{\mathrm{mod}}\rho \\
    &=
    2 L n_{\mathrm{mod}}\rho .
\end{aligned}
\]
This proves the claim.
\end{proof}

\paragraph{Scope of the guarantees.}
These guarantees should be interpreted as algorithmic guarantees rather than
global optimization guarantees. They show that the procedure is a feasible
monotone ascent method over a finite constrained search space, and that an
exhaustive coordinate-ascent variant has a standard local-optimality property.
They do not imply that the returned dataset globally maximizes
$\cos(\mathbf v(D),\mathbf r)$ over $\mathcal D$.

\end{document}